
\documentclass[letterpaper, 10 pt, conference]{ieeeconf}  

\IEEEoverridecommandlockouts                              

\overrideIEEEmargins                                      




\usepackage{booktabs,xmpmulti,amsmath,amssymb,mathtools} 
\usepackage{transparent}
\usepackage{xcolor}
\usepackage{cite}
\usepackage[inkscapearea=page, inkscapearea=nocrop, inkscape=true]{svg}

\usepackage{tikz}
\usetikzlibrary{patterns}
\usepackage{pgfplots}
\tikzstyle{bag} = [align=left]
\usetikzlibrary{shapes.misc}

\tikzset{cross/.style={cross out, draw=black, fill=none, minimum size=2*(#1-\pgflinewidth), inner sep=0pt, outer sep=0pt}, cross/.default={2pt}}

\usepackage{bbm}


\newcommand{\nfloor}[1]{\ensuremath{\lfloor #1 \rfloor}}
\newcommand{\nceil}[1]{\ensuremath{\lceil #1 \rceil}}

\newcommand{\st}{\text{ subject to }}

\DeclareMathOperator*{\minimize}{\ensuremath{\mathrm{minimize}}}


\title{\LARGE \bf
Bridging conformal prediction and scenario optimization
}

\newtheorem{sassumption}{Standing Assumption}
\newtheorem{assumption}{Assumption}
\newtheorem{definition}{Definition}
\newtheorem{thm}{Theorem}
\newtheorem{proposition}{Proposition}

\newtheorem{remark}{Remark}
\newtheorem{corollary}{Corollary}


\author
{
        Niall O'Sullivan$^{1}$, Licio Romao$^{2}$, and Kostas Margellos$^{1}$
        \thanks{
                $^{1}$ Niall O'Sullivan and Kostas Margellos are with the Department of Engineering Science, University of Oxford, Oxford OX1 3PJ, United Kingdom. Email: {\tt\small niall.osullivan@worc.ox.ac.uk; kostas.margellos@eng.ox.ac.uk}
        }
        \thanks{
                $^{2}$ Licio Romao is with the Department of Wind and Energy Systems, Technical University of Denmark, 2800 Kgs. Lyngby, Denmark. Email: {\tt\small licio@dtu.dk}
        }%
        \thanks{For the purpose of Open Access, the authors have applied a CC BY copyright license to any Author Accepted Manuscript (AAM) version arising from this submission.}
}

\begin{document}

\maketitle
\thispagestyle{empty}
\pagestyle{empty}


\begin{abstract}

        Conformal prediction and scenario optimization constitute two important classes of statistical learning frameworks to certify decisions made using data. They have found numerous applications in control theory, machine learning and robotics. Despite intense research in both areas, and apparently similar results, a clear connection between these two frameworks has not been established. By focusing on the so-called vanilla conformal prediction, we show rigorously how to choose appropriate score functions and set predictor map to recover well-known bounds on the probability of constraint violation associated with scenario programs. We also show how to treat ranking of nonconformity scores as a one-dimensional scenario program with discarded constraints, and use such connection to recover vanilla conformal prediction guarantees on the validity of the set predictor. We also capitalize on the main developments of the scenario approach, and show how we could analyze calibration conditional conformal prediction under this lens. Our results establish a theoretical bridge between conformal prediction and scenario optimization.  

\end{abstract}

\section{Introduction}

Modern operations in critical infrastructures, such as transportation, power systems, and robotics, rely on past data to make informed decisions on long-term planning and real-time operation strategies. In the context of power systems, flexibility aggregators make decisions using available data to manage their energy resources, while providing services to the electricity grid, e.g., see \cite{Dimitra2022,Oren2016}. Similar high-stake decisions are made in the context of autonomous systems, where robots must make decisions based on available data to navigate in unknown environments while safely achieving their goals; see \cite{Lindemann2024} and references therein. 

Conformal prediction and the scenario approach theory constitute widely employed techniques that enable a user to rigorously certify the underlying decision-making process. The former, originally proposed in seminal papers \cite{Vovk1999,Vovk2005,Vovk2013}, has gained prominence within the control community due to its simplicity and strong guarantees. It has been recently applied to a wide range of applications, including drug discovery, robotic motion planning, and several machine learning frameworks \cite{Angelopoulos2023,Carlevaro2024,Lindemann2024}. Main ingredients of conformal prediction include the set predictor, which is an output set that contains a new sample with a probability that can be quantified in a distribution-free manner; and the nonconformity scores, which are used to assess the quality of the decision and inform the construction of the set predictor. In the conformal prediction setting, the resulting predictor set is accompanied by distribution-free guarantees on its validity \cite{Vovk2012,Vovk2013,Vovk2013a,Angelopoulos2023}, where validity relates to the property of the set predictor contains a pre-specified mass of all possible samples. 

The so-called \emph{scenario approach} has been developed in a series of seminal papers \cite{Calafiore2004,Calafiore2006,Campi2008,Campi2010,Campi2018} and has received wide attention within the control community due to its ability to attach violation certificates to the solution of an optimization problem. Similarly to conformal prediction, the scenario approach theory has been applied to a wide range of applications, including machine learning \cite{Campi2021}, feedback control design \cite{Calafiore2006, Calafiore2011, Badings2023, Lin2024,Foffano2023}, and uncertainty quantification \cite{Dabbene2022}. The scenario approach involves solving an optimization problem with constraints corresponding to different samples/realizations of some uncertain parameter. A fundamental result has been the (exact) characterization of the  cumulative distribution and the expected value of the so called probability of constraint violation, i.e., the probability that the optimal solution violates the constraint corresponding to a new unseen sample. Such developments serve as generalization guarantees to solutions computed on the basis of a finite number of samples. Recent advancements involve tighter guarantees on scenario optimization with sample discarding \cite{Romao2023,Romao2023a}, connections to statistical learning based on the notion of compression \cite{Campi2023,Margellos2015}, and \emph{a posteriori} assessements \cite{Garatti2022}.

\begin{figure}[h!]
\centering
\begin{tikzpicture}
\draw [draw=black] (-6.5,-0.2) rectangle (-3.5,1.3);
\filldraw [fill=black!20, draw=black] (-6.5,-0.2) rectangle (-3.5,1.3);
\draw[] (-5,1.2) node[below, bag]{$\text{Conformal}$};
\draw[] (-5,0.8) node[below, bag]{$\text{Prediction}$};
\draw[] (-5,0.4) node[below, bag]{$\text{(Vanilla)}$};
\draw [draw=black] (-1.5,-0.2) rectangle (1.5,1.3);
\filldraw [fill=black!20, draw=black] (-1.5,-0.2) rectangle (1.5,1.3);
\draw[] (0,1) node[below, bag]{$\text{Scenario}$};
\draw[] (0,0.6) node[below, bag]{$\text{Optimization}$};
\draw [draw=black] (-4.5,-2.5) rectangle (-0.5,-1);
\filldraw [fill=black!20, draw=black] (-4.5,-2.5) rectangle (-0.5,-1);
\draw[] (-2.5,-1.1) node[below, bag]{$\text{Conformal}$};
\draw[] (-2.5,-1.5) node[below, bag]{$\text{Prediction}$};
\draw[] (-2.5,-1.9) node[below, bag]{$\text{(Calibration Conditional)}$};
\draw[->,ultra thick] (-3.5,0.9)--(-1.5,0.9) node[left]{};
\draw[<-,ultra thick] (-3.5,0.2)--(-1.5,0.2) node[left]{};
\draw[->,ultra thick] (0,-0.2)--(-2.5,-1) node[left]{};
\draw[] (-2.5,1.4) node[below, bag]{$\text{Theorem 4}$};
\draw[] (-2.5,0.7) node[below, bag]{$\text{Theorem 5}$};
\draw[] (0,-0.4) node[below, bag]{$\text{Theorem 6}$};
\end{tikzpicture}
\caption{Connections between conformal prediction and scenario optimization. From conformal prediction to scenario approach: By appropriate choice of nonconformity scores and a set predictor in vanilla conformal prediction, we recover well-known results on the expected value of the probability of constraint violation in scenario programs (Theorem \ref{thm:CP-SP-connection-average-violation}). From scenario approach to conformal prediction: By interpreting the ranking of nonconformity scores as an one-dimensional scenario program with discarded constraints, we derive vanilla conformal prediction results (Theorem \ref{thm:scen_conf_van}). We also view the calibration conditional conformal prediction under the scenario approach lens, using the results on the cumulative distribution of the probability of constraint violation in scenario programs (Theorem \ref{thm:scen_conf_cond}).}
\label{fig:CP-versus-SA-contributions}
\end{figure}
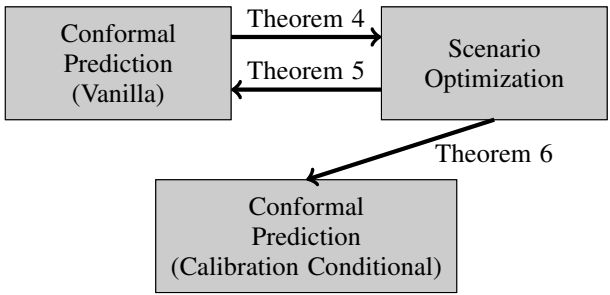

In this paper we aim at establishing formal relationships between the two approaches, that we believe will open the road for transferring results from one domain to the other. Our main contributions are:
\begin{enumerate}
\item From conformal prediction to scenario approach: We show how to choose an appropriate score function and set predictor, at the core of conformal prediction, to bound the expected probability of constraint violation in scenario optimization (Theorem 4). 
\item From scenario approach to vanilla conformal prediction: We show that using a specific scenario optimization program together with a discarding mechanism we can view the so called vanilla conformal prediction under a scenario approach lens and establish the corresponding vanilla conformal prediction bounds (Theorem 5). To achieve this, we build on results related to the expected value of the probability of constraint violation in scenario optimization. \\
\item From scenario approach to calibration conditional conformal prediction: We capitalize on the characterization of the cumulative distribution (rather than the expected value) of the probability of constraint violation in scenario optimization and show how this can be employed to infer calibration conditional conformal prediction bounds (Theorem 6). 
\end{enumerate}

Besides attempts to contrast conformal prediction and the scenario applications in specific applications \cite{Coppola_2024}, \cite{Lin2024}, to the best of our knowledge this is the first formal connection between these domains. Our main contributions are also pictorially illustrated in Figure \ref{fig:CP-versus-SA-contributions}.


The remainder of the paper is organized as follows. Section \ref{sec:prelim} introduces the main concepts from conformal prediction and the scenario approach that are relevant to our developments, and provides some preliminary results. Section \ref{sec:conf_scen} shows how tools from conformal prediction can be leveraged to obtain certain results of the scenario approach theory. In Section \ref{sec:scen_conf} we view conformal prediction under a scenario approach lens. We first show how results on the characterization of the probability of constraint violation in scenario optimization can be leveraged to reinterpret the vanilla conformal prediction bounds (Section \ref{sec:scen_conf_van}), and then recover the so called calibration conditional conformal prediction results (Section \ref{sec:scen_conf_cond}).
Section \ref{sec:conc} concludes the paper and provides directions for future research. 

\subsection*{Notation}

We denote by $\mathbb{N}$ the set of natural numbers. We let $\Omega$ be an abstract probability space and call $(\Omega, \mathcal{F}, \mathbb{P})$ a probability space, where $\mathcal{F}$ is a $\sigma$-algebra of subsets of $\Omega$ and $\mathbb{P}$ is a probability measure defined on $\mathcal{F}$. 

\section{Background and preliminary results}  \label{sec:prelim}


\subsection{Conformal prediction}  \label{sec:prelim_conf}

Conformal prediction has gained significant attention due to its ability to ensure coverage guarantees for the output of general decision making schemes. It provides a distribution-free bound on the probability that a new realization does not belong to a generated prediction interval that is in turn based on a finite number of scenarios/data. Such a predictive framework relies on the construction of non-conformative scores to construct an output set that possesses rigorous guarantees on the probability that a new sample is not contained in the output set. More formally, the conformal prediction set-up can be described as follows \cite{Balasubramanian2014}.

For a given $m \in \mathbb{N}$, a \textit{set predictor} $\Gamma: \Omega^{m} \mapsto 2^{\Omega}$ is a set-valued map sending available data to a collection of realizations that are used to inform the prediction of the unknown outcome. The following measurability assumption is commonly enforced in the context of conformal prediction. 

\begin{sassumption}
        For any $m \in \mathbb{N}$, the subset
        \begin{equation}
                \left\{ (\omega_1, \ldots, \omega_m, \omega) \in \Omega^{m+1}: \omega \in \Gamma(\omega_1,\ldots, \omega_m)
                \right\},
        \end{equation}
        is a Borel measurable subset.
        \label{assump:CP-predictor-set-regularity}
\end{sassumption}

\textit{Validity} in the context of conformal prediction quantifies formally the size of the set 
\begin{equation}
        \big \{ (\omega_1, \ldots, \omega_m, \omega): \omega \notin \Gamma(\omega_1, \ldots, \omega_m) \big \},
        \label{eq:CP-validity}
\end{equation}
which is measurable by Assumption \ref{assump:CP-predictor-set-regularity}. We have the following crucial notion.

\begin{definition}
        For any distribution $\mathbb{P}^{m+1}$ defined on the space $\Omega^{m+1}$ and any $\delta \in (0,1)$, a set predictor is said to be \textit{conservatively valid} at significance level $\delta$ if $\mathbb{P}^{m+1}\big \{ (\omega_1, \ldots, \omega_m, \omega): \omega \notin \Gamma(\omega_1, \ldots, \omega_m) \big \} \leq \delta,$ where the argument of $\mathbb{P}^{m+1}$ is as defined in \eqref{eq:CP-validity}.
        \label{CP-validity}
\end{definition}

Conformal prediction aims at providing nontrivial conservatively valid set predictors. A natural way to construct conservatively valid set predictors is through the concepts of \textit{nonconformity measure and scores}, which effectively provide a scalarisation related to the quality of the prediction. 

Under the terminology used in the conformal prediction literature, a \textit{nonconformity measure} is a permutation-invariant measurable mapping $\mathcal{A}: \Omega^m \mapsto \mathbb{R}^m$ from the sample space $\Omega^m$ to the $m$-dimensional Euclidean space, that is, for all $\pi$ in the permutation group of $\{1, \ldots, m\}$ we have that

\begin{equation}
        \mathcal{A}(\omega_1, \ldots, \omega_m) = \mathcal{A}(\omega_{\pi(1)}, \ldots, \omega_{\pi(m)})),
        \label{eq:CF-nonconformity-score}
\end{equation}
Intuitively, permutation invariance enforces the nonconformity score to be a property of the samples themselves, and not of the order in which these are presented. 

Given a collection of samples $S = \{\omega_1,\ldots,\omega_m\} \subset \Omega^m$ and an additional ``test'' sample $\omega$, let $\mathcal{A}(S \cup \{\omega\}) = [R_1,\ldots,R_m,R]^\top \in \mathbb{R}^{m+1}$ be the output of the nonconformity measure when its input is the collection of the $m$ samples in $S$ and the additional sample $\omega$. The elements $R_i$, $i=1,\ldots,m$, can be thought of as nonconformity scores in the sense of  \cite{Balasubramanian2014,Lindemann2024}.
Notice the slight abuse of notation, where the argument of $\mathcal{A}$ lies in $\Omega^{m+1}$, while the output is a vector in $\mathbb{R}^{m+1}$. 
For simplicity of the exposition we impose the following assumption.
\begin{assumption} \label{ass:order}
Assume that for any collection $(\omega_1,\ldots,\omega_m,\omega) \in \Omega^{m+1}$ of independent samples from a distribution $\mathbb{P}$, the elements $R_1,\ldots,R_m$ are also independent, there are no repetitions, and are sorted in non-decreasing order.
\end{assumption}
The requirement that there are no repetitions and that the nonconformity scores are ordered could be relaxed for some of the subsequent statements, but we impose it here for simplicity. Also, for some of the presented results independence could be relaxed to the somewhat weaker notion of exchangability.

Define then the function
\begin{equation}
        f(\omega; S) = \frac{|\{i\in \{1, \ldots, m+1\}\}: R_i \geq R |}{m+1}, \label{eq:func_f}
\end{equation}
where $|\cdot|$ denotes the cardinality of its argument. 
For any fixed $\delta \in (0,1)$, the set predictor using \eqref{eq:CF-nonconformity-score} can be constructed by means of the rule
\begin{equation}
        \Gamma_\delta(\omega_1, \ldots, \omega_m)= \{ \omega \in \Omega: f(\omega; S) > \delta\}.
        \label{eq:Vanilla-CP}
\end{equation}
We then have an important result for the conformal predictor given in \eqref{eq:Vanilla-CP}, which will be referred to as the \textit{vanilla conformal predictor}.

\begin{thm}[Proposition 1.2 \cite{Balasubramanian2014}]
        Let $\mathcal{A}$ be a nonconformity measure, and suppose Assumption \ref{ass:order} holds. Fix $\delta \in (0,1)$.
        Then, the set predictor $\Gamma_\delta$ given in \eqref{eq:Vanilla-CP} is conservatively valid at significance level $\delta$, i.e.,
        \begin{equation}
                \mathbb{P}^{m+1}\big \{(\omega_1,\ldots,\omega_m,\omega) \in \Omega^{m+1}: \omega \notin \Gamma_\delta(S)\big \} \leq \delta.
        \end{equation}
        \label{thm:Vanilla-CP}
        \hfill $\square$
\end{thm}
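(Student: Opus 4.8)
The plan is to reduce the validity inequality to a statement about the rank of the test nonconformity score $R$ within the collection $R_1,\dots,R_m,R$, and then to use exchangeability to determine the law of that rank. First I would unfold the definitions: since $\Gamma_\delta(S)=\{\omega:f(\omega;S)>\delta\}$, the event $\{\omega\notin\Gamma_\delta(S)\}$ coincides with $\{f(\omega;S)\le\delta\}$. Writing
\[
N \;:=\; (m+1)\,f(\omega;S)\;=\;\big|\{\, i \in \{1,\dots,m+1\} : R_i \ge R \,\}\big|
\]
(with the convention $R_{m+1}=R$, so that $N\ge 1$ and $N$ takes values in $\{1,\dots,m+1\}$), the goal becomes $\mathbb{P}^{m+1}(N\le\delta(m+1))\le\delta$. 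Under Assumption~\ref{ass:order} there are no ties almost surely, so $N=m+2-\rho$, where $\rho\in\{1,\dots,m+1\}$ denotes the rank (position in non-decreasing order) of $R$ among the $m+1$ scores.

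The crux is to show that $\rho$, equivalently $N$, is uniformly distributed on $\{1,\dots,m+1\}$. The samples $(\omega_1,\dots,\omega_m,\omega)$ are i.i.d., hence exchangeable, and the nonconformity measure assigns scores in a permutation-invariant way (cf.~\eqref{eq:CF-nonconformity-score}); consequently, applying any permutation of $\{1,\dots,m+1\}$ to the sample tuple permutes the associated scores accordingly without changing their joint distribution. Hence, conditionally on the unordered collection of the $m+1$ score values, the score attached to the test sample is equally likely to occupy any one of the $m+1$ order positions, and since ties occur with probability zero, each value of $\rho$ has probability $1/(m+1)$.

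It then remains to count: with $N$ uniform on $\{1,\dots,m+1\}$,
\[
\mathbb{P}^{m+1}\big\{(\omega_1,\dots,\omega_m,\omega):\omega\notin\Gamma_\delta(S)\big\}
=\mathbb{P}^{m+1}\big(N\le\delta(m+1)\big)
=\frac{\big|\{k\in\{1,\dots,m+1\}: k\le\delta(m+1)\}\big|}{m+1}
=\frac{\lfloor\delta(m+1)\rfloor}{m+1}\le\delta,
\]
where if $\delta(m+1)<1$ the counting set is empty and the probability equals $0$. This is precisely conservative validity at significance level $\delta$.

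The step I expect to be the main obstacle is establishing the uniform law of the rank: it requires making the exchangeability argument rigorous despite the notational conventions of Assumption~\ref{ass:order} (the $R_i$ presented already sorted, together with the stated abuse of notation in the domain/codomain of $\mathcal{A}$), and handling the almost-sure no-ties condition so that $\rho$ is well defined. A clean route is to work first with the fully unordered tuple of scores, observe that a permutation of the samples induces the corresponding permutation of the pre-sorting scores, and deduce that the event ``$R$ has rank $k$'' has a probability independent of $k$. The remaining parts are routine bookkeeping, the final inequality being just $\lfloor x\rfloor\le x$.
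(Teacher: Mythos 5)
Your proof is correct and takes essentially the standard route: the paper states Theorem~1 without proof (citing Proposition~1.2 of the conformal prediction literature), and the cited argument is exactly yours --- exchangeability of the samples plus permutation invariance of $\mathcal{A}$ gives that the rank of $R$ among the $m+1$ scores is uniform on $\{1,\dots,m+1\}$ (no ties by Assumption~\ref{ass:order}), whence $\mathbb{P}^{m+1}\{\omega\notin\Gamma_\delta(S)\}=\lfloor\delta(m+1)\rfloor/(m+1)\le\delta$. Your attention to the sorted-score notational convention is also consistent with how the paper later handles the same point in the discussion preceding Corollary~\ref{cor:viol}.
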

Notice that the level of conservatism, i.e., the quality of the predictor in Theorem \ref{thm:Vanilla-CP}, relies on the quantity in \eqref{eq:func_f}, which in turn based on the choice of nonconformity measure.

We now discuss a direct byproduct of Theorem \ref{thm:Vanilla-CP}. Recall that  
$\mathcal{A}(S \cup \{\omega\}) = [R_1,\ldots,R_m,R]^\top \in \mathbb{R}^{m+1}$ includes the nonconformity scores, and notice that due to the definition of $f(\omega;S)$, we have that
\begin{align}
&\big \{ \omega \notin \Gamma_\delta(S) \big \} = \nonumber \\
&\big \{ \omega \in \Omega:~ \Big |\{i\in \{1, \ldots, m+1\}\}: R_i \geq R \Big | \leq \nfloor{\delta(m+1)} \big \}, \label{eq:sets}
\end{align} 
where $\nfloor{\cdot}$ denotes the highest integer lower than or equal to its argument.
Let $p = \nceil{(1-\delta)(m+1)}$ with $\nceil{\cdot}$ denoting the smallest integer higher than or equal to its argument, and consider the set
\begin{align}
Q:= \big \{ \omega \in \Omega:~ R> R_p \big \}.
\end{align}
Notice that due to the independence part of Assumption \ref{ass:order}, $R$ depends on $\omega$, and $R_i$
depends on $\omega_i$, for each $i=1,\ldots,m$. However, $R_p$ depends on $\omega_1,\ldots,\omega_m$. 
Under the choice of $p$, $R_p$ is the $1-\delta$ quantile of the empirical distribution over the discrete set $R_1,\ldots,R_m$ and $\infty$, where infinity is appended as a correction to obtain a finite sample truncation. Therefore, $R_p := \mathrm{Quantile}_{1-\delta}(R_1,\ldots,R_m,\infty)$, which implies that there exist $\nceil{(1-\delta)(m+1)}$ nonconformity scores lower than or equal to $R_p$.

Consider any $\omega \in Q$. There exist $\nceil{(1-\delta)(m+1)}$ nonconformity scores lower than or equal to $R_p$, and hence of $R$, since $R_p < R$ for any such $\omega$. Therefore, the number of nonconformity scores such that $R_i\geq R$ would be at most $m+1 - \nceil{(1-\delta)(m+1)} = \nfloor{\delta(m+1)}$, i.e., 
\begin{align}
\Big |\{i\in \{1, \ldots, m+1\}\}: R_i \geq R \Big | \leq \nfloor{\delta(m+1)} .
\end{align}
By \eqref{eq:sets}, any such $\omega$ would be such that $\omega \notin \Gamma_\delta(S)$.
We have thus proven the following corollary of Theorem \ref{thm:Vanilla-CP}.

\begin{corollary} \label{cor:viol}
Consider the setting of Theorem \ref{thm:Vanilla-CP}. Fix $\delta \in (0,1)$ and let  $p = \nceil{(1-\delta)(m+1)}$. We then have that 
\begin{align}
                \mathbb{P}^{m+1} \big \{(\omega_1,\ldots,\omega_m,\omega) \in \Omega^{m+1}: R> R_p \big \} \leq \delta.
\end{align}
       \label{cor:CP-stat-control-formulation} 
       \hfill $\square$
\end{corollary}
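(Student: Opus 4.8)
The plan is to derive the corollary directly from Theorem~\ref{thm:Vanilla-CP} by showing that the event $\{R > R_p\}$ is contained in the event $\{\omega \notin \Gamma_\delta(S)\}$ and then invoking monotonicity of the probability measure.

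First I would translate the event ``$\omega \notin \Gamma_\delta(S)$'' into a statement about ranks of nonconformity scores. Combining the set predictor rule \eqref{eq:Vanilla-CP} with the definition of $f(\omega; S)$ in \eqref{eq:func_f}, membership fails precisely when $|\{i : R_i \geq R\}|/(m+1) \leq \delta$; since the numerator is a nonnegative integer, this is equivalent to $|\{i : R_i \geq R\}| \leq \nfloor{\delta(m+1)}$, which is exactly the reformulation recorded in \eqref{eq:sets}. This step is essentially bookkeeping, but it is worth checking that the strict inequality in \eqref{eq:Vanilla-CP} becomes a non-strict inequality with the floor on the right-hand side.

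Next I would fix $p = \nceil{(1-\delta)(m+1)}$ and use Assumption~\ref{ass:order} (scores sorted in non-decreasing order, no repetitions) to argue that exactly $p$ of the values among $R_1, \ldots, R_m, \infty$ are at most $R_p$ — appending $\infty$ here also covers the corner case $p = m+1$, in which $R_p = \infty$, the event $\{R > R_p\}$ is empty, and the bound is vacuous. For any $\omega$ with $R > R_p$, each of these $p$ values is strictly below $R$, so at most $(m+1) - p$ of the indices $i \in \{1, \ldots, m+1\}$ satisfy $R_i \geq R$. The arithmetic identity $(m+1) - \nceil{(1-\delta)(m+1)} = \nfloor{\delta(m+1)}$ then yields $|\{i : R_i \geq R\}| \leq \nfloor{\delta(m+1)}$, i.e., $\{R > R_p\} \subseteq \{\omega \notin \Gamma_\delta(S)\}$ by \eqref{eq:sets}.

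Finally, measurability of $\{R > R_p\}$ follows from Assumption~\ref{assump:CP-predictor-set-regularity} together with measurability of the nonconformity measure, so applying $\mathbb{P}^{m+1}$ and monotonicity gives $\mathbb{P}^{m+1}\{R > R_p\} \leq \mathbb{P}^{m+1}\{\omega \notin \Gamma_\delta(S)\} \leq \delta$ by Theorem~\ref{thm:Vanilla-CP}. I expect the only mild obstacle to be the careful handling of the ceiling/floor bookkeeping — in particular verifying $(m+1) - \nceil{(1-\delta)(m+1)} = \nfloor{\delta(m+1)}$, which uses $\nceil{n - x} = n - \nfloor{x}$ for integer $n$ — and the degenerate case $p = m+1$; everything else is an immediate consequence of the already-established vanilla conformal prediction bound.
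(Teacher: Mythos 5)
Your proposal is correct and follows essentially the same route as the paper: both arguments rewrite $\{\omega \notin \Gamma_\delta(S)\}$ via \eqref{eq:sets}, show that $R > R_p$ forces $|\{i : R_i \geq R\}| \leq m+1-\nceil{(1-\delta)(m+1)} = \nfloor{\delta(m+1)}$, and then invoke Theorem~\ref{thm:Vanilla-CP}. Your explicit treatment of the degenerate case $p=m+1$ and of measurability are minor additions the paper leaves implicit.
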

Corollary \ref{cor:viol} bounds the probability that a new ``test'' $R$ exceeds the $p$-th quantile over a discrete distribution based on $R_1,\ldots,R_m$; these samples are often referred to as the calibration dataset.

\subsection{Scenario approach}


The scenario approach is formulated in \cite{Campi2018} as natural methodology to deal with uncertainty in an optimization context. To this end, let $S = \{\omega_1, \ldots, \omega_m\}$ be a collection of $m$ independent samples from $\mathbb{P}$, which could encode different realizations of an uncertain parameter. Scenario optimization involves minimizing some objective function over a family of constraints, each of them corresponding to the associated sample $\omega_i$, $i=1,\ldots,m$. This gives rise to the so called \emph{scenario program}, which can be formalized as

\begin{align}
        \minimize_{x \in \mathcal{X}} & \quad c^\top x \nonumber \\
        \st & \quad g(x, \omega) \leq 0, \quad \forall \omega \in S \setminus \mathcal{R},
        \label{eq:scenario-program}
\end{align}
where $x \in \mathbb{R}^d$ is the optimization variable,  $\mathcal{X} \subset \mathbb{R}^d$ is an implicit domain, and $g: \mathbb{R}^d \times \mathbb{R}^q \mapsto \mathbb{R}$ is a constraint function which we assume is convex with respect to its first argument and measurable with respect to the second. It should be noted that the objective function is linear without loss of generality; any convex objective function could be admissible, and in such cases we could bring the optimization
problem in the standard scenario program format by means of an epigraphic reformulation.

We denote by $\mathcal{R}\subset S$ with $|R| = r$ a collection of $r$ samples that are removed from the set $S$. The introduction of $\mathcal{R}$ in the scenario program adds more flexibility as it allows removing samples considered as outliers (see results on sampling and discarding \cite{Campi2010,Romao2023}), and at the same time will play a crucial role in the next sections to reinterpret conformal prediction within the realm of scenario optimization.

We impose the following assumption.
\begin{assumption} \label{ass:unique}
        With $\mathbb{P}^{m}$-probability one with respect to the choice of $(\omega_1,\ldots,\omega_m)$, the (convex) scenario program \eqref{eq:scenario-program} has an non-empty feasibility region, and its optimal solution is unique.     
        \label{assump:scenario-program-existence}
\end{assumption}
This is rather standard in the scenario approach literature. It can be relaxed to allow for infeasible problem instances, while in case of multiple optimizers a particular one can be singled-out by means of a (deterministic) convex tie-break rule \cite{Calafiore2006}. 

We denote the (unique under Assumption \ref{ass:unique}) optimal solution of \eqref{eq:scenario-program} by $x^\star(S)$; notice that $x^\star(S):~\Omega^m \to \mathbb{R}^d$. Due to its dependency on the samples in $S$, this is a 
a random variable defined in the probability space $(\Omega^m, \mathcal{F}, \mathbb{P}^{m})$, where the $\sigma$-algebra is the product $\sigma$-algebra \cite{Salamon2016}. Notice that $x^\star(S)$ depends on all samples in $S$ even if constraints are enforced only on those in $S \setminus \mathcal{R}$. This is the case since to decide on the samples that are to be discarded, all samples in $S$ need to be inspected, so formally $\mathcal{R}$ depends on $S$ as well.

 Main developments in the scenario approach \cite{Calafiore2004,Campi2008,Campi2010} entail probably approximately correct (PAC) bounds on the probability of violation associated with the optimal solution of \eqref{eq:scenario-program}. In other words, consider the random variable $V:\Omega^m \mapsto [0,1]$ defined as
\begin{equation}
        V(S) = \mathbb{P}\{\omega \in \Omega: g(x^\star(S), \omega) >  0 \},
        \label{eq:violation-probability}
\end{equation}
which represents the violation probability associated with the optimal solution of the scenario program \eqref{eq:scenario-program}, when it comes to a new sample $\omega$. The scenario approach theory provides bounds on the cumulative distribution of $V(S)$ and its expected value;  see \cite{Campi2018} for more details. For a specific class of scenario programs termed \emph{fully-supported}, these bounds are also tight. We provide a formal definition of such programs below. Note that the scenario approach results are by no means restricted to this class of problems. However, we show in Section \ref{sec:scen_conf} that the optimization programs for which we will invoke the scenario approach exhibit this property, hence to streamline our exposition we limit our discussion to these instances.


\begin{definition}[Fully-supported scenario programs]
Consider a scenario program as in \eqref{eq:scenario-program} with $\mathcal{R} = \emptyset$.
       Such a programis said to be fully-supported if, with $\mathbb{P}^m$-probability one, the cardinality of the set
        \begin{equation}
              \big  \{ \omega \in S: x^\star(S) \neq x^\star(S \setminus \{\omega\}) \big \}
                \label{eq:fully-supported-scenario-program}
        \end{equation}
        is equal to $d$, where $x^\star(S\setminus \{\omega\})$ denotes the optimal solution of the scenario program \eqref{eq:scenario-program} with the sample $\omega$ removed.
        \label{def:fully-supported-scenario-program} 
\end{definition}

The collection of samples defined in \eqref{eq:fully-supported-scenario-program} is called the \textit{support set}, and consists of samples that if removed from the set $S$ will lead to a change in the optimal solution of the scenario program \eqref{eq:scenario-program}. Such a set can be thought of a sample compression in the sense of \cite{Margellos2015}, as solving the problem only with the these samples would yield the same solution with the one that would have been obtained had all the samples be employed.

We provide a fundamental result in the scenario approach theory that characterizes the cumulative distribution function of the probability of constraint violation. This result was first established in \cite{Campi2008} for any convex program and without any sample discarding, i.e., when $\mathcal{R} = \emptyset$. Here, we provide a variant of this result for the case of fully-supported scenario programs, and when $\mathcal{R} \neq \emptyset$.

\begin{thm}[Theorem 4, \cite{Romao2023}]
        Consider Assumption \ref{assump:scenario-program-existence}. Further assume that \eqref{eq:scenario-program} is fully-supported, and that the discarded samples are removed in an integer multiple of $d$ from the set $S$ by solving a cascade of scenario programs where the support set is discarded at each stage. We then have that  
        \begin{align}
                \mathbb{P}^m \big \{ S \in \Omega^m&: V(S) \leq \epsilon \big \} \nonumber \\
                &\geq 1 - \sum_{i = 0}^{r+d-1} \binom{m}{i} \epsilon^i (1-\epsilon)^{m-i}. 
        \end{align}
        \label{thm:scenario-approach-discarded-constraints}
        \hfill $\square$
\end{thm}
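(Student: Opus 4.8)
The plan is to argue by induction on the number of discarding rounds $k := r/d$, which is a nonnegative integer by the hypothesis that $r$ is removed as an integer multiple of $d$, and to establish along the way the sharper statement that, for fully-supported programs, the stated inequality in fact holds with equality, $\mathbb{P}^m\{V(S)\le\epsilon\}=1-\sum_{i=0}^{r+d-1}\binom{m}{i}\epsilon^{i}(1-\epsilon)^{m-i}$ (the claimed bound being one direction of it). The base case $k=0$ (no discarding, $\mathcal{R}=\emptyset$) is precisely the classical exact characterization of the violation probability for fully-supported convex scenario programs of \cite{Campi2008}: the optimal solution has exactly $d$ support constraints $\mathbb{P}^m$-a.s., and $V(S)\sim\betadist{d}{m-d+1}$, which is the claim for $r=0$. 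A key observation to keep in mind throughout is that this base case, and hence the statement we are proving, is \emph{distribution-free}: it holds verbatim when the $m$ samples are i.i.d.\ from an arbitrary distribution on $\Omega$, not only from $\mathbb{P}$.

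For the inductive step, assume the result for $k-1$ rounds, for an arbitrary number of samples and an arbitrary sampling distribution, and run the cascade on $S$. By full-supportedness the first-stage support set $\mathcal{S}_1$ has cardinality exactly $d$ a.s., and the first-stage optimizer coincides with $x^\star(\mathcal{S}_1)=:\hat x$ (solving with the support constraints alone returns the same solution). Conditioning on the unordered tuple $\mathcal{S}_1$ and on the probability-one event that it is indeed the support set, one checks, using exchangeability of the samples and permutation-invariance of the selection rule, that the remaining $m-d$ samples are then i.i.d.\ from the restricted distribution $\mathbb{Q}:=\mathbb{P}(\cdot\mid g(\hat x,\cdot)\le 0)$ and that the residual program on them is again fully-supported. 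The remaining $k-1$ rounds of the cascade are then, conditionally, exactly the object covered by the inductive hypothesis applied to $m-d$ samples drawn from $\mathbb{Q}$, so with $V_{\mathbb{Q}}$ denoting violation measured under $\mathbb{Q}$,
$$\mathbb{Q}^{\,m-d}\big\{V_{\mathbb{Q}}(x^\star(S))\le \epsilon'\big\}\;\ge\;1-\sum_{i=0}^{r-1}\binom{m-d}{i}(\epsilon')^{i}(1-\epsilon')^{\,m-d-i}.$$
To return to the unconditional statement under $\mathbb{P}$, set $\eta:=\mathbb{P}\{g(\hat x,\cdot)>0\}$, the violation probability of the first-stage solution; by the base case applied to the $d$-sample fully-supported program defining $\hat x$ we have $\eta\sim\betadist{d}{1}$, i.e.\ density $d\,\eta^{d-1}$ on $[0,1]$, and conditionally on $\mathcal{S}_1$ it is a constant. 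A direct computation gives $V(S)=(1-\eta)\,V_{\mathbb{Q}}(x^\star(S))+\mathbb{P}\big(\{g(x^\star(S),\cdot)>0\}\cap\{g(\hat x,\cdot)>0\}\big)$, with the last term lying in $[0,\eta]$; plugging this into the displayed conditional bound and integrating against the $\betadist{d}{1}$ law of $\eta$ should collapse, via a Beta-function / stick-breaking identity, to $1-\sum_{i=0}^{r+d-1}\binom{m}{i}\epsilon^{i}(1-\epsilon)^{m-i}$, closing the induction.

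The hard part will be the last step: the change-of-measure bookkeeping and the verification that integrating the conditional $(k-1)$-round bound against the $\betadist{d}{1}$ law of $\eta$ telescopes to exactly the claimed binomial tail rather than to something strictly larger. This is precisely where the \emph{specific} cascade rule — discarding the support set, $d$ constraints at a time — is indispensable: for a generic rule that removes $r$ constraints one only recovers the weaker bound of \cite{Campi2010} carrying the combinatorial prefactor $\binom{r+d-1}{r}$, and the telescoping is what removes it. A secondary point that must be discharged with care, though I expect it to be routine, is the claim that conditioning on $\mathcal{S}_1$ genuinely leaves the residual samples i.i.d.\ from the restricted distribution $\mathbb{Q}$ and the residual program fully-supported, so that the distribution-free inductive hypothesis applies legitimately; this rests on convexity of $g(\cdot,\omega)$ together with Assumption~\ref{assump:scenario-program-existence}.
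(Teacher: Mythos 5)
First, note that the paper does not prove this statement: it is imported verbatim as Theorem 4 of \cite{Romao2023}, so there is no in-paper proof to compare against. Judged on its own terms, your recursive ``peel off the support set and condition'' architecture is the right one, and the stick-breaking identity you are aiming for is indeed what makes the cascade bound come out as the single binomial tail $1-\sum_{i=0}^{r+d-1}\binom{m}{i}\epsilon^i(1-\epsilon)^{m-i}$ without the $\binom{r+d-1}{r}$ prefactor of the generic discarding bound. But there is a concrete error that breaks the computation as written: the claim that $\eta=\mathbb{P}\{g(\hat x,\cdot)>0\}\sim\mathrm{Beta}(d,1)$. The first-stage solution $\hat x$ is the optimizer of the \emph{full} $m$-sample problem (it merely \emph{coincides} with $x^\star(\mathcal{S}_1)$); its support set is not an i.i.d.\ draw of $d$ samples but the $d$ extreme ones selected out of $m$, so the base case applies with sample size $m$ and gives $\eta\sim\mathrm{Beta}(d,\,m-d+1)$. (Sanity check with $d=1$: $\eta$ is the probability of exceeding the maximum of $m$ i.i.d.\ scores, which has mean $1/(m+1)$, not $1/2$.) Integrating your conditional bound against $\mathrm{Beta}(d,1)$ does not telescope to the claimed tail; integrating against $\mathrm{Beta}(d,m-d+1)$ does, via the order-statistics identity that if $\eta$ is the $d$-th of $m$ uniform order statistics and, conditionally, $V_{\mathbb{Q}}$ is the $r$-th of the remaining $m-d$ rescaled to $[\eta,1]$, then $\eta+(1-\eta)V_{\mathbb{Q}}\sim\mathrm{Beta}(d+r,\,m-d-r+1)$, whose CDF is exactly the right-hand side. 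So the fix is local but essential, and the ``should collapse'' step you defer is precisely where it would have surfaced.

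Two further points need more than the hand-wave you give them. The conditional claim that, given the support set $\mathcal{S}_1$, the remaining $m-d$ samples are i.i.d.\ from $\mathbb{Q}=\mathbb{P}(\cdot\mid g(\hat x,\cdot)\le 0)$ and the residual program is again fully supported \emph{under $\mathbb{Q}$ with $m-d$ samples} is not implied by Definition~\ref{def:fully-supported-scenario-program}, which is stated for a single sample size under $\mathbb{P}$; your induction silently requires full-supportedness to propagate to every stage of the cascade and every conditional law, which must either be assumed or derived. Finally, your inequality $V(S)\le(1-\eta)V_{\mathbb{Q}}(x^\star(S))+\eta$ does give the correct direction for the stated lower bound on $\mathbb{P}^m\{V(S)\le\epsilon\}$, but your opening promise to prove \emph{equality} additionally requires the cross term $\mathbb{P}\big(\{g(x^\star(S),\cdot)>0\}\cap\{g(\hat x,\cdot)>0\}\big)$ to equal $\eta$, i.e.\ that every point violated by an interim solution is also violated by the final one --- exactly the extra non-degeneracy condition the paper flags after Theorem~\ref{thm:scenario-approach-discarded-constraints} as what is needed for tightness, and not something that holds for free.
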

The discarding procedure of Theorem \ref{thm:scenario-approach-discarded-constraints} was introduced in \cite{Romao2023, Romao2023a}. Informally, it requires removing samples in batches of $d$ samples at a time, where the samples that are removed each time are the support ones, hence removing them guarantees that the solution changes; see \eqref{eq:fully-supported-scenario-program}.
The main result of Theorem \ref{thm:scenario-approach-discarded-constraints} would also be tight, i.e., it would hold with equality if all removed samples violate the optimal solution of the scenario program \eqref{eq:scenario-program} and also all interim solutions generated via the iterative discarding procedure (see Theorem 5 in \cite{Romao2023}).
 For the scenario programs that this theorem will be invoked in the sequel, this property is satisfied.
Note that this requirement is related to non-degeneracy assumptions in the scenario approach theory \cite{Campi2008}, as well as in some works on conformal prediction \cite{Vovk1999,Balasubramanian2014}.

Theorem \ref{thm:scenario-approach-discarded-constraints}, provides a characterization of the cumulative distribution of $V(S)$, namely, it bounds $\mathbb{P}^m \big \{ S \in \Omega^m: V(S) \leq \epsilon \big \}$ by the tail of the distribution of a binomial random variable (beta distribution). As such, we can exploit this to obtain bounds also for other moments, like the expected value. We summarize this in the theorem below.

\begin{thm}\label{thm:scenario-approach-expectation-violation}
Consider the same setting with Theorem \ref{thm:scenario-approach-discarded-constraints}. We then have that
\begin{align}
                \mathbb{E}_{S \sim \mathbb{P}^m} \big \{V(S) \big \} \leq \frac{r+d}{m+1}.
\end{align}
\hfill $\square$
\end{thm}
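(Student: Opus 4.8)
The plan is to derive the expectation bound on $V(S)$ directly from the cumulative distribution bound in Theorem \ref{thm:scenario-approach-discarded-constraints} by integrating the tail. Since $V(S) \in [0,1]$, we have the standard identity $\mathbb{E}_{S \sim \mathbb{P}^m}\{V(S)\} = \int_0^1 \mathbb{P}^m\{S : V(S) > \epsilon\}\, d\epsilon$. Theorem \ref{thm:scenario-approach-discarded-constraints} gives $\mathbb{P}^m\{S : V(S) > \epsilon\} \leq \sum_{i=0}^{r+d-1}\binom{m}{i}\epsilon^i(1-\epsilon)^{m-i}$, so it suffices to integrate this binomial tail against $\epsilon$ over $[0,1]$.

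The key computational step is to evaluate $\int_0^1 \sum_{i=0}^{r+d-1}\binom{m}{i}\epsilon^i(1-\epsilon)^{m-i}\, d\epsilon$. I would use the Beta-integral identity $\int_0^1 \epsilon^i(1-\epsilon)^{m-i}\, d\epsilon = \frac{i!\,(m-i)!}{(m+1)!} = \frac{1}{(m+1)\binom{m}{i}}$, so that each term $\binom{m}{i}\epsilon^i(1-\epsilon)^{m-i}$ integrates to $\frac{1}{m+1}$. Summing over $i = 0, \ldots, r+d-1$ gives exactly $\frac{r+d}{m+1}$, which is the claimed bound. An alternative, perhaps cleaner, route is to recognize the binomial tail $\sum_{i=0}^{k-1}\binom{m}{i}\epsilon^i(1-\epsilon)^{m-i}$ as $\mathbb{P}\{\mathrm{Bin}(m,\epsilon) \leq k-1\}$ and invoke the well-known relation between binomial tails and incomplete Beta functions, but the direct term-by-term integration is elementary and self-contained, so I would present that.

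A subtle point worth addressing is the passage from $\mathbb{P}^m\{V(S) \leq \epsilon\}$ to $\mathbb{P}^m\{V(S) > \epsilon\}$: the theorem bounds the former from below by $1 - (\text{binomial tail})$, which is equivalent to bounding the latter from above by the binomial tail, and this holds for every $\epsilon \in [0,1]$ (at $\epsilon = 1$ both sides are handled trivially, and at $\epsilon = 0$ the bound is $\binom{m}{0} = 1$, which is vacuous but harmless). Measurability of $V(S)$ and the validity of the tail formula for the expectation follow from $V$ being a $[0,1]$-valued random variable on $(\Omega^m, \mathcal{F}, \mathbb{P}^m)$, as established in Section \ref{sec:prelim}.

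I do not anticipate a genuine obstacle here; the only mild care needed is in justifying the interchange of sum and integral (finite sum, so immediate by linearity) and in quoting the Beta integral correctly. The result is essentially a corollary of Theorem \ref{thm:scenario-approach-discarded-constraints}, and the proof is a one-line integration once the tail identity $\int_0^1 \binom{m}{i}\epsilon^i(1-\epsilon)^{m-i}\, d\epsilon = \frac{1}{m+1}$ is in hand.
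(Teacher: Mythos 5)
Your proof is correct, and it reaches the bound by a route that differs in its details from the paper's. The paper argues via stochastic dominance: it observes that the lower bound on the cumulative distribution of $V(S)$ in Theorem \ref{thm:scenario-approach-discarded-constraints} coincides with the CDF of a $\mathrm{Beta}(r+d,\,m+1-r-d)$ random variable, writes down that distribution's density $q(\epsilon) = (r+d)\binom{m}{r+d}\epsilon^{r+d-1}(1-\epsilon)^{m-r-d}$, and evaluates $\int_0^1 \epsilon\, q(\epsilon)\, \mathrm{d}\epsilon = \frac{r+d}{m+1}$ by repeated integration by parts. You instead apply the tail-integral identity $\mathbb{E}\{V(S)\} = \int_0^1 \mathbb{P}^m\{V(S) > \epsilon\}\,\mathrm{d}\epsilon$ and integrate the binomial tail term by term, with each summand contributing exactly $\frac{1}{m+1}$ via the Beta integral. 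The two arguments rest on the same underlying fact (the binomial tail is a Beta CDF), but yours has two advantages: it avoids computing the Beta density and the integration-by-parts chain entirely, and it makes rigorous the step that the paper handles somewhat informally (the claim that a pointwise CDF lower bound implies the ordering of expectations is precisely the tail-integral identity you invoke, so your proof supplies the justification the paper leaves implicit). The paper's route, in exchange, makes the Beta-distribution interpretation of $V(S)$ explicit, which is reused later in Remark \ref{rem:sample2}. Both are valid; no gaps in your argument.
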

\begin{proof}
Theorem \ref{thm:scenario-approach-discarded-constraints} implies that cumulative distribution of $V(S)$ is lower-bounded by the tail of a binomial distribution. As a result, the associated density function would be concentrated to lower values compared to that of the binomial tail, which we denote by $q(\epsilon)$. This then implies that the expected value $V(S)$ will be upper-bounded by the expected value associated with the binomial tail, i.e.,
\begin{equation}
 \mathbb{E}_{S \sim \mathbb{P}^m} \big \{V(S) \big \} \leq \int_0^1 \epsilon q(\epsilon)~\mathrm{d}\epsilon. \label{eq:exp}
\end{equation}
The binomial tail $1 - \sum_{i = 0}^{r+d-1} \binom{m}{i} \epsilon^i (1-\epsilon)^{m-i}$ coincides with the cumulative distribution function of a beta distribution with parameters $r+d$ and $m+1-r-d$. Its density function is then given (can be also verified by differentiating the binomial tail with respect to $\epsilon$), by
\begin{align}
q(\epsilon) = (r+d) \binom{m}{r+d}\epsilon^{r+d-1}(1-\epsilon)^{m-r-d}.
\end{align}
Substituting the aforementioned expression of $q(\epsilon)$ in \eqref{eq:exp}, and by repeated integration by parts, it can be shown that the right-hand side in \eqref{eq:exp} is equal to $\frac{r+d}{m+1}$, thus concluding the proof.
\end{proof}

Note that this result would also hold with equality if the scenario program is such that all interim solutions violate the discarded samples. For the case where $\mathcal{R} = \emptyset$ (i.e., $r=0$), Theorem \ref{thm:scenario-approach-expectation-violation} reduces to 
one of the first scenario approach results that have appeared in the literature  \cite{Calafiore2004}. 
Theorem \ref{thm:scenario-approach-expectation-violation} generalizes that result to the case where samples are discarded. The only modification is that $d$ is replaced with $r+d$. This is since, under the discarding mechanism of the theorem, the sharp characterization of the cumulative distribution of Theorem \ref{thm:scenario-approach-discarded-constraints} established in \cite{Romao2023} is employed. For more generic discarding mechanisms, the bound in Theorem \ref{thm:scenario-approach-discarded-constraints} should be replaced with
$1 - \binom{r+d-1}{r} \sum_{i = 0}^{r+d-1} \binom{m}{i} \epsilon^i (1-\epsilon)^{m-i}$ (see \cite{Campi2010}). Due to the presence of an additional factor in front of the summation, the bound in Theorem \ref{thm:scenario-approach-expectation-violation} would then be $\binom{r+d-1}{r} \frac{r+d}{m+1}$.

\section{From conformal prediction to the scenario approach} \label{sec:conf_scen} 
In this section we show how to start from the original conformal prediction formulation developed in \cite{Vovk2005,Vovk2013} to derive a scenario approach bound on the expected probability of constrained violation for the optimal solution of an arbitrary convex scenario programs. To achieve this, we choose an appropriate nonconformity score that in turn allows us to establish connections between the associated set predictor and the set of samples for which constraints are violated. 

For the developments of this section consider the convex scenario program in \eqref{eq:scenario-program} with no discarded samples, i.e., $\mathcal{R}=\emptyset$ and hence $r=0$. Recall that the, unique under Assumption \ref{ass:unique}, optimal solution of \eqref{eq:scenario-program}, is denoted by $x^\star(S)$. We further assume in this section that the scenario program under consideration is fully-supported in the sense of Definition \ref{def:fully-supported-scenario-program}. This assumption could be relaxed by considering a regularized version of the problem as in \cite{Romao2023}; however, we do not pursue this discussion here.

Let $m\geq d$. We consider the following nonconformity measure $\mathcal{A}: \Omega^m \mapsto \mathbb{R}^m$ defined as
\begin{align}
        \mathcal{A}(S) = \Big [g(x^\star(S), \omega_1), \ldots, g(x^\star(S), \omega_m) \Big ]^\top, \label{eq:noncon}
\end{align}
where each element of the mapping is a valuation of the constraint function on $x^\star(S)$ and a sample $\omega_i$, $i=1,\ldots,m$. It can be observed that this mapping is permutation invariant. Under this choice, the nonconformity scores (see Section \ref{sec:prelim_conf}) are given by $R_i = g(x^\star(S), \omega_i)$, $i=1,\ldots,m$. If another sample $\omega$ is considered, the mapping would be $\mathcal{A}(S \cup \{\omega\}): \Omega^{m+1} \mapsto \mathbb{R}^{m+1}$. Its last element would be $R = g(x^\star(S\cup\{\omega\}), \omega)$, where $x^\star(S\cup\{\omega\})$ denotes the optimal solution of the scenario program when fed with the $m+1$ samples in  $S\cup \{\omega\}$.

Now, let $\delta = \frac{d}{m+1}$ and following \eqref{eq:Vanilla-CP} notice that the corresponding set predictor takes the form
\begin{equation}
        \Gamma_\delta(S) = \left\{ \omega \in \Omega: f(\omega; S) > \frac{d}{m+1}\right\},
        \label{eq:vanilla-CP-SP-connection}
\end{equation}
where $f(\omega; S)$ is defined as in \eqref{eq:func_f}. At the same time, consider the samples $\omega \in \Omega$, for which the optimal solution $x^\star(S)$ of \eqref{eq:scenario-program} remains feasible, namely,
\begin{equation}
        U = \big \{\omega \in \Omega: g(x^\star(S), \omega) \leq 0 \big \},
        \label{eq:SP-violation-probability-set}
\end{equation} 
We show that the set predictor in \eqref{eq:vanilla-CP-SP-connection} coincides with the set in \eqref{eq:SP-violation-probability-set}. This is summarized in the following proposition.

\begin{proposition}
Consider Assumptions \ref{ass:order} and \ref{ass:unique}, and further assume that the scenario program in \eqref{eq:scenario-program} is fully supported. Fix $\delta = \frac{d}{m+1}$. We have that that with $\mathbb{P}^m$-probability one with respect to the choice of $S$, 
\begin{align}
\Gamma_\delta(S) = U.
\end{align}
        \label{prop:CP-SP-connection}
        \hfill $\square$
\end{proposition}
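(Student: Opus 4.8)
The plan is to prove the two inclusions $U \subseteq \Gamma_\delta(S)$ and $\Gamma_\delta(S) \subseteq U$ by splitting on whether a candidate test sample $\omega$ is feasible or infeasible for $x^\star(S)$, and in each case counting how many of the $m+1$ entries of $\mathcal{A}(S\cup\{\omega\})$ are $\geq$ the last entry $R$. Two structural facts will be used throughout. First, by convexity of $g(\cdot,\omega)$ and uniqueness of the optimizer (Assumption~\ref{ass:unique}), appending to the scenario program a constraint already satisfied at $x^\star(S)$ leaves the optimizer unchanged, whereas appending a violated one necessarily changes it. Second, since the program is fully supported (Definition~\ref{def:fully-supported-scenario-program}), with $\mathbb{P}$-probability one exactly $d$ constraints are active at the optimal solution: the $d$ support constraints are active, and (under the attendant non-degeneracy) no further constraint is active. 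I will apply the second fact both to the $m$-sample program $S$ and to the $(m+1)$-sample program $S\cup\{\omega\}$, the latter being again a scenario program fed with i.i.d.\ samples; a Fubini argument then transfers its almost-sure properties into a statement holding, for $\mathbb{P}^m$-almost every $S$, for $\mathbb{P}$-almost every $\omega$.

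First I would take $\omega\in U$, i.e.\ $g(x^\star(S),\omega)\le 0$. By the first fact, $x^\star(S\cup\{\omega\})=x^\star(S)$, so $R_i=g(x^\star(S),\omega_i)$ and $R=g(x^\star(S),\omega)\le 0$. Feasibility of $x^\star(S)$ gives $R_i\le 0$ for all $i$, and by the second fact at least $d$ of the $R_i$ equal $0$, hence satisfy $R_i\ge R$; together with the trivially counted entry $R_{m+1}=R$ this gives $|\{i\in\{1,\dots,m+1\}: R_i\ge R\}|\ge d+1$, so $f(\omega;S)\ge \tfrac{d+1}{m+1}>\tfrac{d}{m+1}=\delta$ and $\omega\in\Gamma_\delta(S)$.

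Next I would take $\omega\notin U$, i.e.\ $g(x^\star(S),\omega)>0$. By the first fact $y:=x^\star(S\cup\{\omega\})\neq x^\star(S)$; since removing $\omega$ from $S\cup\{\omega\}$ then changes the optimizer, $\omega$ is a support constraint of the $(m+1)$-sample program, hence active, so $R=g(y,\omega)=0$. Applying the second fact to the $(m+1)$-sample program, exactly $d$ of its constraints are active, one being $\omega$; thus exactly $d-1$ of the $R_i=g(y,\omega_i)$ equal $0$ and the remaining $m-d+1$ are strictly negative. Counting the entries $\ge R=0$ yields precisely $(d-1)+1=d$, so $f(\omega;S)=\tfrac{d}{m+1}$, which is not strictly larger than $\delta$, whence $\omega\notin\Gamma_\delta(S)$. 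Combining the two cases gives $\Gamma_\delta(S)=U$ for $\mathbb{P}^m$-almost every $S$ (the exceptional $\omega$, where some $R_i$ or $R$ lands on a boundary, forming a $\mathbb{P}$-null set); alternatively one may phrase the membership tests directly via \eqref{eq:sets} using $\lfloor\delta(m+1)\rfloor=d$.

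I expect the main obstacle to be the bookkeeping around activeness and ties: justifying cleanly that "fully supported" plus non-degeneracy pins the number of active constraints to exactly $d$ at every relevant optimizer (for both $S$ and $S\cup\{\omega\}$), and that those $d$, resp.\ $d-1$, active scores are the only ones attaining the threshold value, so the counts land exactly at $d+1$ and $d$. This is also the sole place where the strictness of the inequality defining $\Gamma_\delta$ matters — it is precisely what separates the infeasible case (count $=d$, excluded) from the feasible case (count $\ge d+1$, included) — so the argument must track strictness carefully, alongside the almost-everywhere-in-$\omega$ caveat needed when $\mathbb{P}$ is not atomless.
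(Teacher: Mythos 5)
Your proof is correct and follows essentially the same route as the paper's: the two inclusions $U \subseteq \Gamma_\delta(S)$ and $U^c \subseteq \Gamma_\delta(S)^c$, with $x^\star(S\cup\{\omega\})=x^\star(S)$ in the feasible case and $\omega$ becoming a support/active constraint in the infeasible case, and full-supportedness pinning the count of scores $\geq R$ to at least $d+1$ versus at most $d$. Your treatment is somewhat more explicit than the paper's about the exact activity counts, the null-set caveats, and the role of the strict inequality in \eqref{eq:Vanilla-CP}, but the argument is the same.
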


\begin{proof}
        1. $U \subseteq \Gamma_\delta(S)$: Pick any $\omega \in U$. Since $g(x^\star(S), \omega) \leq 0$, we have that the optimal solution remains feasible when the constraints are enforced at this $\omega$. Therefore, solving the same scenario program with $m+1$ samples, namely, the ones in $S$ and $\omega$, will not change the optimal solution. We can formalize this as $x^\star(S \cup \{\omega\}) = x^\star(S)$, where $x^\star(S \cup \{\omega\})$ corresponds to the optimal solution when the constraints are enforced on the samples in $S \cup \{\omega\}$.
    
 Due to the assumption that the scenario program under consideration is fully supported with $\mathbb{P}^m$-probability one with respect to the choice of $S$, in the problem with $S$ samples exactly $d$ of them constituted the support set, which implies that exactly $d$ of them where active at the optimum. As such, in the problem with $S \cup \{\omega\}$, $\omega$ would result in an inactive constraint, as otherwise we would have $d+1$ samples in the support set.
 Therefore, there would be at least $d$ samples in $S$ such that $R_i=g(x^\star(S),\omega_i) \geq g(x^\star(S), \omega) = R$ (the $d$ ones that would be active would definitely have a higher constraint value).
 
 Therefore, we have that 
 \begin{align}
\big |\{i \in \{1,\ldots,m+1\}\}: R_i \geq R \big | > d, 
\end{align}
where the cardinality in the left-hand side would involve these $d$ samples and the sample leading to $R$, i.e., it would be at least $d+1$.
By \eqref{eq:func_f}, we thus have that $f(\omega; S) > \frac{d}{m+1}$, and as a result $\omega \in \Gamma_\delta(S)$. This shows that $U \subseteq \Gamma_\delta(S)$, thus concluding the first part of the proof.

 2. $\Gamma_\delta(S) \subseteq U$: This is equivalent to showing that $U^c \subset \Gamma_\delta(S)^c$, where the superscript $c$ denotes the set complement. Pick any $\omega \in U^c$. Since $g(x^\star(S), \omega) >0$ this then implies that the the solution $x^\star(S)$ becomes infeasible when it comes into the new sample $\omega$. Equivalently, when considering a scenario program with samples $S \cup \{\omega\}$ then $\omega$ would correspond to a constraint that is necessarily active at the new optimal solution. Since the program is assumed to be fully supported, at most $d-1$ of the samples $\omega_1,\ldots,\omega_m$ would result in active constraints for that problem. Therefore, 
  \begin{align}
\big |\{i \in \{1,\ldots,m+1\}\}: R_i \geq R \big | \leq d, 
\end{align}
where the cardinality in the left-hand side involves these $d-1$ samples plus the one leading to $R$.
By \eqref{eq:func_f}, this then implies that $\omega \in \Gamma_\delta(S)^c$, thus concluding the second part of the proof.   
\end{proof}

We can now combine Theorem \ref{thm:Vanilla-CP} -- a result from conformal prediction -- with Proposition \ref{prop:CP-SP-connection}, to recover Theorem \ref{thm:scenario-approach-expectation-violation} with $r=0$. This connection is formalized in the following theorem. 

\begin{thm}
Consider Assumptions \ref{ass:order} and \ref{ass:unique}, and further assume that the scenario program in \eqref{eq:scenario-program} with $r=0$ is fully supported. Consider also the nonconformity measure defined in \eqref{eq:noncon}. We then have that
   \begin{align}
                \mathbb{P}^{m+1} &\big \{(\omega_1,\ldots,\omega_m,\omega) \in \Omega^{m+1}: g(x^*(S),\omega) >0 \big \} \nonumber \\
                &=\mathbb{E}_{S\sim \mathbb{P}^m} \big \{ V(S) \big \} \leq \frac{d}{m+1}, \label{eq:thm4}
        \end{align}
        where $V(S)$ is as in \eqref{eq:violation-probability}.
        \label{thm:CP-SP-connection-average-violation}
        \hfill $\square$
\end{thm}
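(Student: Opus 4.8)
The plan is to assemble three ingredients already at our disposal: the vanilla conformal prediction guarantee (Theorem~\ref{thm:Vanilla-CP}), the identification of the set predictor with the feasibility set (Proposition~\ref{prop:CP-SP-connection}), and a Tonelli-type disintegration of the product measure $\mathbb{P}^{m+1}$.

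First I would prove the equality $\mathbb{P}^{m+1}\{(\omega_1,\ldots,\omega_m,\omega): g(x^\star(S),\omega)>0\} = \mathbb{E}_{S\sim\mathbb{P}^m}\{V(S)\}$. Since the $m+1$ samples are drawn independently from $\mathbb{P}$, the law on $\Omega^{m+1}$ factorises as $\mathbb{P}^{m+1}=\mathbb{P}^m\otimes\mathbb{P}$, with $S=(\omega_1,\ldots,\omega_m)$ governed by $\mathbb{P}^m$ and the test sample $\omega$ by $\mathbb{P}$, independently. The map $(S,\omega)\mapsto\mathbbm{1}[g(x^\star(S),\omega)>0]$ is nonnegative and measurable (measurability of $x^\star(\cdot)$ together with measurability of $g$ in its second argument), so Tonelli's theorem allows integrating first over $\omega$ for fixed $S$, which by \eqref{eq:violation-probability} returns $V(S)$, and then over $S$, giving $\mathbb{E}_{S\sim\mathbb{P}^m}\{V(S)\}$. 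This is precisely the claimed equality.

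Next I would convert the inequality into a statement about the set predictor. By Proposition~\ref{prop:CP-SP-connection}, with $\delta=\frac{d}{m+1}$ we have $\Gamma_\delta(S)=U$ with $\mathbb{P}^m$-probability one, where $U=\{\omega: g(x^\star(S),\omega)\le 0\}$ as in \eqref{eq:SP-violation-probability-set}. Taking complements, $\{\omega\notin\Gamma_\delta(S)\}=U^c=\{\omega: g(x^\star(S),\omega)>0\}$ outside a $\mathbb{P}^m$-null set of realisations of $S$; since a $\mathbb{P}^m$-null base event has $\mathbb{P}^{m+1}=\mathbb{P}^m\otimes\mathbb{P}$ measure zero, it contributes nothing, so
\begin{equation*}
\mathbb{P}^{m+1}\{(\omega_1,\ldots,\omega_m,\omega): g(x^\star(S),\omega)>0\} = \mathbb{P}^{m+1}\{(\omega_1,\ldots,\omega_m,\omega): \omega\notin\Gamma_\delta(S)\}.
\end{equation*}
Applying Theorem~\ref{thm:Vanilla-CP} with the nonconformity measure \eqref{eq:noncon} and $\delta=\frac{d}{m+1}$ bounds the right-hand side by $\delta=\frac{d}{m+1}$. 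Chaining this with the equality of the previous paragraph yields \eqref{eq:thm4}.

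I expect the only delicate point to be the bookkeeping around which optimal solution enters the nonconformity score: the scores defining $\Gamma_\delta(S)$ involve $x^\star(S\cup\{\omega\})$, whereas $V(S)$ and $U$ are expressed through $x^\star(S)$. This is exactly the gap that Proposition~\ref{prop:CP-SP-connection} closes (on $\{\omega\in U\}$ one has $x^\star(S\cup\{\omega\})=x^\star(S)$, and on its complement the full-support property forces the cardinality counts to align), so once that proposition is invoked the rest is routine. The remaining care is simply to promote the $\mathbb{P}^m$-almost-sure set identity of Proposition~\ref{prop:CP-SP-connection} to a $\mathbb{P}^{m+1}$-almost-sure identity before comparing probabilities, as noted above.
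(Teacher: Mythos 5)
Your proposal is correct and follows essentially the same route as the paper's proof: the equality is obtained by the Tonelli/Fubini disintegration of $\mathbb{P}^{m+1}$ (which is exactly the content of Proposition~\ref{prop:exp_int}, invoked by the paper via \cite[Proposition 3]{Campi_TAC2009note}), and the inequality follows by identifying $\{\omega: g(x^\star(S),\omega)>0\}$ with $\{\omega \notin \Gamma_\delta(S)\}$ through Proposition~\ref{prop:CP-SP-connection} and then applying Theorem~\ref{thm:Vanilla-CP} with $\delta = \frac{d}{m+1}$. Your additional care in promoting the $\mathbb{P}^m$-almost-sure set identity to the product measure is a sound refinement of a step the paper leaves implicit.
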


\begin{proof}
Consider the nonconformity measure defined in \eqref{eq:noncon}. Set then $\delta = \frac{d}{m+1}$ and consider the set $\Gamma_\delta(S)$ in \eqref{eq:vanilla-CP-SP-connection}.
By Proposition \ref{prop:CP-SP-connection}, the left-hand side in \eqref{eq:thm4} is equal to 
\begin{align}
\mathbb{P}^{m+1} \big \{(\omega_1,\ldots,\omega_m,\omega) \in \Omega^{m+1}: \omega \notin \Gamma_\delta(S) \big \}, 
\end{align}
which by means of Theorem \ref{thm:Vanilla-CP} and our choice of $\delta$, is upper-bounded by $\frac{d}{m+1}$. It remains to show the equality in \eqref{eq:thm4}. This follows from \cite[Proposition 3]{Campi_TAC2009note}; see also the proof of Proposition \ref{prop:exp_int}, where this argument is shown.
\end{proof}
By the discussion below Theorem \ref{thm:scenario-approach-expectation-violation}, for fully-supported programs we would expect Theorem \ref{thm:CP-SP-connection-average-violation} to be in fact tight.


\section{From the scenario approach to conformal prediction} \label{sec:scen_conf}
In this section we follow the opposite route, and show how using tools from scenario optimization we can recover conformal prediction bounds, offering a different interpretation.

\subsection{Towards vanilla conformal prediction}  \label{sec:scen_conf_van}
Consider a set of $m$ independent samples $S = \{\omega_1,\ldots,\omega_m\}$ generated according to $\mathbb{P}$. In view of establishing a link with conformal prediction, assume that these samples are mapped to another collection of $m$ scalar samples $R_1,\ldots,R_m$, that will in turn play the role of nonconformity scores, as reasoned in Section \ref{sec:prelim_conf}. We consider Assumption \ref{ass:order}, and assume here that these samples are in ascending order without any repetitions, i.e.,
\begin{align}
R_1 < R_2 < \ldots < R_m.
\end{align}

Fix $r \in [0,m]$, and consider the following discarding procedure: the highest sample among $R_1,\ldots,R_m$ is discarded; once the remaining samples are identified, we repeat this process until $r$ samples are discarded in total. 
We then consider the following optimization problem
\begin{align}
        \minimize_{\overline{R} \in \mathbb{R}} & \quad \overline{R} \nonumber \\
        \st & \quad R_i \leq \overline{R}, \quad \forall i = 1,\ldots,m-r.
        \label{eq:scenario-program-order}
\end{align}
This problem is an one-dimensional ($d=1$, as the only decision variable is $\overline{R}$) convex scenario program. Notice that it is directly in the form of \eqref{eq:scenario-program}, with the $S\setminus \mathcal{R}$ being the $R_1,\ldots, R_{m-r}$, i.e., the samples remaining upon discarding $r$ of them. This program encodes the problem of determining the $(m-r)$-smallest sample in a sorted list. It is pictorially illustrated in Figure \ref{fig:samples_order}.

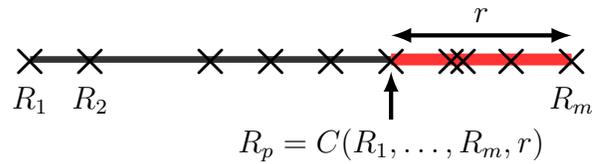
\begin{figure}[t!]
\centering
\begin{tikzpicture}[>=latex, font=\large, scale = 1.6]   
\draw[black!80, line width=2.5pt] plot[domain=-4.5:-1.5] (\x,{0});   
\draw[red!80, line width=4.5pt] plot[domain=-1.5:0] (\x,{0});   
\draw[<->,ultra thick] (-1.5,0.2)--(0,0.2) node[left]{};
\draw[] (-4.5,0.21) node[below, bag]{$\LARGE{\boldsymbol{\times}}$};
\draw[] (-4,0.21) node[below, bag]{$\LARGE{\boldsymbol{\times}}$};
\draw[] (-3,0.21) node[below, bag]{$\LARGE{\boldsymbol{\times}}$};
\draw[] (-2.5,0.21) node[below, bag]{$\LARGE{\boldsymbol{\times}}$};
\draw[] (-2,0.21) node[below, bag]{$\LARGE{\boldsymbol{\times}}$};
\draw[] (-1.5,0.21) node[below, bag]{$\LARGE{\boldsymbol{\times}}$};
\draw[] (-1,0.21) node[below, bag]{$\LARGE{\boldsymbol{\times}}$};
\draw[] (-0.9,0.21) node[below, bag]{$\LARGE{\boldsymbol{\times}}$};
\draw[] (-0.5,0.21) node[below, bag]{$\LARGE{\boldsymbol{\times}}$};
\draw[] (0,0.21) node[below, bag]{$\LARGE{\boldsymbol{\times}}$};
\draw[] (-4.5,-0.15) node[below, bag]{$R_1$};
\draw[] (-4,-0.15) node[below, bag]{$R_2$};
\draw[] (0,-0.15) node[below, bag]{$R_m$};
\draw[] (-0.75,0.5) node[below, bag]{$r$};
\draw[->,ultra thick] (-1.5,-0.5)--(-1.5,-0.1) node[left]{};
\draw[] (-1.5,-0.5) node[below, bag]{$R_p=C(R_1,\ldots,R_m,r)$};
\end{tikzpicture}
\caption{Scenario program pictorial construction: Given a collection of ordered samples (nonconformity scores), $R_1,\ldots,R_m$, discard the $r$ highest ones (the ones that fall in the red region). The maximum out of the remaining ones is the solution of an one-dimensional scenario program, denoted by $R_p=C(R_1,\ldots,R_m,r)$.}
\label{fig:samples_order}
\end{figure}

Let $p=m-r$. The optimal solution of the aforementioned problem is $R_p := C(R_1,\ldots,R_m,r)$. By the function $C$ we emphasize the dependence of $R_p$ on all samples (even the ones that are discarded), and the number of discarded samples $r$. The probability of constraint violation for this problem would be of the form
\begin{align}
V(S) = \mathbb{P} \big \{ R> R_p \big \}.  \label{eq:prob_viol_1d}
\end{align}
Notice the slight abuse of notation: the probability here refers to selecting a new $\omega$, which will induce a new $R$.

Under Assumption \ref{ass:order}, \eqref{eq:scenario-program-order} is a fully-supported problem in the sense of Definition \ref{def:fully-supported-scenario-program}. Moreover, since $d=1$, our discarding procedure coincides with the one stated in Theorem \ref{thm:scenario-approach-discarded-constraints}. In particular, once a sample is removed, the interim maximum sample is strictly smaller than the removed one, or in other words the discarded samples are violated by all interim solutions. As argued below Theorem \ref{thm:scenario-approach-discarded-constraints}, the results of Theorems \ref{thm:scenario-approach-discarded-constraints} and \ref{thm:scenario-approach-expectation-violation} would thus hold with equality.

By Theorem \ref{thm:scenario-approach-expectation-violation}, and upon substituting the expression for the probability of constraint violation for our problem, 
\begin{equation} \label{eq:exp_1d}
\mathbb{E}_{S \sim \mathbb{P}^m} \big \{ \mathbb{P} \big \{ R> R_p \big \} \big \} = \frac{r+1}{m+1}.
\end{equation}
Notice that the numerator in the right-hand side of \eqref{eq:exp_1d} is $r+1$ as $d=1$.
We have the following proposition due to \cite[Proposition 3]{Campi_TAC2009note}; we provide a proof here for completeness, adapted to our context.

\begin{proposition} \label{prop:exp_int}
We have that 
\begin{align}
&\mathbb{E}_{S \sim \mathbb{P}^m} \big \{ \mathbb{P} \big \{ R> R_p \big \} \big \} \nonumber \\
&~~= \mathbb{P}^{m+1} \big \{ (\omega_1,\ldots,\omega_m,\omega) \in \Omega^{m+1}:~ R> R_p \big \}.
\end{align}
\hfill $\square$
\end{proposition}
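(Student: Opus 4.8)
The plan is to recognize the left-hand side as an iterated integral and apply Fubini's theorem to collapse it into a single probability over the product space $\Omega^{m+1}$. First I would write $\mathbb{P}\{R > R_p\} = \mathbb{E}_{\omega \sim \mathbb{P}}\big[\mathbbm{1}\{R(\omega) > R_p(S)\}\big]$, using that $R$ is a measurable function of the fresh sample $\omega$ alone while $R_p = C(R_1,\ldots,R_m,r)$ is a measurable function of $S = (\omega_1,\ldots,\omega_m)$ alone (this separation of dependencies was highlighted just before Corollary~\ref{cor:viol}, and follows from the independence part of Assumption~\ref{ass:order} together with measurability of the scores and of the sorting/discarding map). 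Then the outer expectation over $S \sim \mathbb{P}^m$ gives
\begin{equation}
\mathbb{E}_{S \sim \mathbb{P}^m}\big\{\mathbb{P}\{R > R_p\}\big\} = \int_{\Omega^m} \left( \int_{\Omega} \mathbbm{1}\{R(\omega) > R_p(S)\}\, \mathbb{P}(\mathrm{d}\omega) \right) \mathbb{P}^m(\mathrm{d}S). \nonumber
\end{equation}

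Next I would invoke Fubini--Tonelli: the integrand $\mathbbm{1}\{R(\omega) > R_p(S)\}$ is nonnegative and jointly measurable on $\Omega^{m+1}$ with its product $\sigma$-algebra (this is where one uses that the event $\{(\omega_1,\ldots,\omega_m,\omega): R > R_p\}$ is measurable — an analogue of Standing Assumption~\ref{assump:CP-predictor-set-regularity}, and in the present one-dimensional setting it is simply the preimage of an open half-space under a measurable map). Since independence means the joint law of $(\omega_1,\ldots,\omega_m,\omega)$ is exactly the product measure $\mathbb{P}^{m+1} = \mathbb{P}^m \otimes \mathbb{P}$, Tonelli lets me swap the order of integration (or rather, collapse the iterated integral) to obtain
\begin{equation}
\int_{\Omega^{m+1}} \mathbbm{1}\{R > R_p\}\, \mathbb{P}^{m+1}(\mathrm{d}(\omega_1,\ldots,\omega_m,\omega)) = \mathbb{P}^{m+1}\big\{(\omega_1,\ldots,\omega_m,\omega) \in \Omega^{m+1}: R > R_p\big\}, \nonumber
\end{equation}
which is precisely the claimed right-hand side.

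The main obstacle — really the only subtlety, since the rest is a textbook application of Fubini — is the measurability bookkeeping: one must be careful that $R_p$, as the output of the sorting-and-discarding procedure $C(R_1,\ldots,R_m,r)$, is a genuinely measurable function of $S$ (it is, being a composition of the measurable nonconformity map with the continuous order statistic / max operations), and that the joint indicator is product-measurable so that Tonelli applies. Given Assumptions~\ref{ass:order} and the standing measurability conventions already in force in the paper, these points are routine, so I would state them briefly and then conclude. I expect the write-up to be short: set up the iterated integral, cite independence to identify the joint law with the product measure, apply Tonelli, and read off the result.
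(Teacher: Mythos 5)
Your proposal is correct and follows essentially the same route as the paper's proof: write the left-hand side as an iterated integral of the indicator $\mathbf{1}_{\{R>R_p\}}$, use independence to identify $\mathbb{P}^m\otimes\mathbb{P}$ with $\mathbb{P}^{m+1}$ and collapse the iterated integral via Fubini--Tonelli, then read off the probability. The only difference is that you make the measurability bookkeeping explicit, which the paper leaves implicit.
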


\begin{proof}
We have that 
\begin{align}
&\mathbb{E}_{S \sim \mathbb{P}^m} \big \{ \mathbb{P} \big \{ R> R_p \big \} \big \} \nonumber \\
&~~= \int_{\Omega^m} \Bigg ( \int_{\Omega} \mathbf{1}_{\big \{R>R_p \big \}}~ \mathrm{d} \mathbb{P}(\omega) \Bigg ) \mathrm{d} \mathbb{P}^m(\omega_1,\ldots,\omega_m) \nonumber \\
&~~= \int_{\Omega^{m+1}} \mathbf{1}_{\big \{R>R_p \big \}}~ \mathrm{d} \mathbb{P}^{m+1}(\omega_1,\ldots,\omega_m,\omega) \nonumber \\
&~~= \mathbb{P}^{m+1} \big \{ (\omega_1,\ldots,\omega_m,\omega) \in \Omega^{m+1}:~ R> R_p \big \}, 
\end{align}
where the first equality is by the definition of the expected value (outer integral), and the definition of the probability as an integral (term in parenthesis, where the argument of that integral is an indicator function). The second equality involves combining the integrals (recall that samples are independent), and the last one is by reverting from an $m+1$-dimensional integral to the associated $\mathbb{P}^{m+1}$ measure. 
\end{proof}

Proposition \ref{prop:exp_int} establishes the fact that the expected value of the probability of constraint violation is identical to the $\mathbb{P}^{m+1}$ measure of the associated event, which offers an assessment based on one level of probability.
Combining \eqref{eq:exp_1d} and Proposition \ref{prop:exp_int}, leads to the following theorem.
\begin{thm} \label{thm:scen_conf_van}
Consider Assumptions \ref{ass:order} and \ref{ass:unique}.
Fix $\delta \in (0,1)$, and consider the scenario program in \eqref{eq:scenario-program-order} with $r = m -\nceil{(1-\delta)(m+1)}$. We then have that $p = \nceil{(1-\delta)(m+1)}$, and 
\begin{align}
\mathbb{P}^{m+1} \big \{ (\omega_1,\ldots,\omega_m,\omega) \in \Omega^{m+1}:~ R> R_p \big \} \leq \delta.
\end{align}
\hfill $\square$
\end{thm}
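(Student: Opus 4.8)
The plan is to combine the scenario-approach computation in \eqref{eq:exp_1d}, which gives the expected value of the probability of constraint violation for the one-dimensional scenario program \eqref{eq:scenario-program-order}, with Proposition \ref{prop:exp_int}, which identifies that expected value with the $\mathbb{P}^{m+1}$ measure of the event $\{R > R_p\}$. The key point is purely bookkeeping with the floor/ceiling functions: we must verify that the particular choice $r = m - \nceil{(1-\delta)(m+1)}$ is admissible (i.e., $r \in [0,m]$), that it indeed yields $p = m - r = \nceil{(1-\delta)(m+1)}$, and that the resulting bound $\frac{r+1}{m+1}$ is no larger than $\delta$.

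First I would check feasibility of the choice of $r$. Since $\delta \in (0,1)$, we have $(1-\delta)(m+1) \in (0, m+1)$, so $\nceil{(1-\delta)(m+1)} \in \{1, \ldots, m+1\}$. Hence $r = m - \nceil{(1-\delta)(m+1)} \in \{-1, 0, \ldots, m-1\}$; one should note that the degenerate boundary case giving $r=-1$ (which occurs only when $\nceil{(1-\delta)(m+1)} = m+1$, i.e., $\delta$ very small) can be excluded or handled separately, and otherwise $r \in [0,m]$ as required by the setup of \eqref{eq:scenario-program-order}. Then $p = m - r = \nceil{(1-\delta)(m+1)}$ by construction, which establishes the first claim of the statement.

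Next I would invoke \eqref{eq:exp_1d}, which (recalling that \eqref{eq:scenario-program-order} is fully supported with $d=1$ and the discarding procedure removes the running maximum, so Theorems \ref{thm:scenario-approach-discarded-constraints} and \ref{thm:scenario-approach-expectation-violation} hold with equality) gives
\begin{align}
\mathbb{E}_{S \sim \mathbb{P}^m} \big \{ \mathbb{P} \big \{ R > R_p \big \} \big \} = \frac{r+1}{m+1}.
\end{align}
By Proposition \ref{prop:exp_int}, the left-hand side equals $\mathbb{P}^{m+1}\{(\omega_1,\ldots,\omega_m,\omega): R > R_p\}$. It therefore remains to show $\frac{r+1}{m+1} \leq \delta$. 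Substituting $r = m - \nceil{(1-\delta)(m+1)}$, this reduces to $\nceil{(1-\delta)(m+1)} \geq (1-\delta)(m+1)$, which is immediate from the definition of the ceiling function, and hence $r + 1 = m+1 - \nceil{(1-\delta)(m+1)} \leq m+1 - (1-\delta)(m+1) = \delta(m+1)$, so $\frac{r+1}{m+1} \leq \delta$.

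The main obstacle, such as it is, is not analytical but consists of pinning down the integer-rounding arithmetic cleanly and in confirming that the chosen $r$ sits in the valid range so that Theorem \ref{thm:scenario-approach-expectation-violation} (and the equality case described below Theorem \ref{thm:scenario-approach-discarded-constraints}) genuinely applies; once that is in place the conclusion is a one-line chain of (in)equalities. It is worth remarking that this recovers exactly the bound of Corollary \ref{cor:viol}, now obtained through the scenario-approach route rather than directly from the vanilla conformal prediction guarantee of Theorem \ref{thm:Vanilla-CP}.
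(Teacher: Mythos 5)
Your proposal is correct and follows essentially the same route as the paper's proof: it combines \eqref{eq:exp_1d} with Proposition \ref{prop:exp_int} and then verifies $\frac{r+1}{m+1} = \frac{m+1-\nceil{(1-\delta)(m+1)}}{m+1} \leq \delta$ via the ceiling inequality (the paper phrases this equivalently through $\nfloor{\delta(m+1)} \leq \delta(m+1)$). Your additional check that $r \geq 0$, flagging the degenerate case $\nceil{(1-\delta)(m+1)} = m+1$, is a small point of care the paper omits but does not constitute a different argument.
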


\begin{proof}
By Proposition \ref{prop:exp_int} and \eqref{eq:exp_1d}, we have that
\begin{align}
\mathbb{P}^{m+1} &\big \{ (\omega_1,\ldots,\omega_m,\omega) \in \Omega^{m+1}:~ R> R_p \big \} = \frac{r+1}{m+1} \nonumber \\
& = \frac{m+1 - \nceil{(1-\delta)(m+1)}}{m+1} \nonumber \\
&= \frac{\nfloor{\delta(m+1)}}{m+1} \leq \delta.
\end{align}
where the second equality is due to the choice of $r$, and the third is since $m+1 - \nceil{(1-\delta)(m+1)} = \nfloor{\delta(m+1)}$. The last inequality is since $\nfloor{\delta(m+1)} \leq \delta (m+1)$, and thus concludes the proof.
\end{proof}

The statement of Theorem \ref{thm:scen_conf_van}  would be tight, i.e., it would hold with equality, if $\nfloor{\delta(m+1)}$ is an integer.

Theorem \ref{thm:scen_conf_van} shows that $R_p = C(R_1,\ldots,R_m,r)$ can be interpreted as the $1-\delta$ quantile $\mathrm{Quantile}_{1-\delta}(R_1,\ldots,R_m,\infty)$
over the nonconformity scores, as under the choice of $r$ there would exist $\nceil{(1-\delta)(m+1)}$ nonconformity scores lower than or equal to $R_p$. It can be in turn computed via the one-dimensional scenario program in \eqref{eq:scenario-program-order} with an appropriate number of discarded samples. Leveraging then bounds of the scenario approach on the expected probability of constraint violation we recover the result of Corollary \ref{cor:viol}.

\begin{remark} \label{rem:sample1}
Theorem \ref{thm:scen_conf_van} links three quantities: the total number of samples $m$, the number of samples to be discarded $r$, and the probability/confidence level $\delta$. We need to choose two out of the three parameters and infer the other one. In the theorem's statement we fixed $m$ and $\delta$, while $r$ was a function of them. An alternative use would be to choose $\delta$ and $r$ (the number of samples we are willing to discard), and infer the number of samples/data we need. Therefore,
\begin{align}
\frac{r+1}{m+1} \leq \delta ~\Longleftrightarrow~ m \geq \frac{r+1}{\delta} - 1,
\end{align}
and serves as an explicit sample size bound to meet the desired confidence level $\delta$.
\end{remark}

\subsection{Towards calibration conditional conformal prediction}  \label{sec:scen_conf_cond}
Consider \eqref{eq:scenario-program-order} and recall that, as discussed in the previous subsection, under Assumption \ref{ass:order}, it is a fully-supported problem. Theorem \ref{thm:scenario-approach-discarded-constraints} is then applicable, while since it is an one-dimensional problem ($d=1$) and due to the specific discarding procedure, the statement of Theorem \ref{thm:scenario-approach-discarded-constraints} is tight, i.e., it holds with equality.
We then have the following theorem, directly emanating from Theorem \ref{thm:scenario-approach-discarded-constraints}, under the constraint violation probability in \eqref{eq:prob_viol_1d}.
 \begin{thm}
Consider Assumptions \ref{ass:order} and \ref{ass:unique}.
Fix $\epsilon \in (0,1)$, and consider the scenario program in \eqref{eq:scenario-program-order} with $r = m -\nceil{(1-\epsilon)(m+1)}$. Set then $\delta \in (0,1)$ such that
\begin{align}
\delta = \sum_{i = 0}^{r} \binom{m}{i} \epsilon^i (1-\epsilon)^{m-i}.
\end{align}
We then have that $p = \nceil{(1-\epsilon)(m+1)}$, and 
        \begin{align}
                \mathbb{P}^m \big \{ S \in \Omega^m&: \mathbb{P} \big \{ R> R_p \big \} \leq \epsilon \big \} = 1-\delta. \label{eq:bound}
        \end{align}
        \label{thm:scen_conf_cond}
        \hfill $\square$
\end{thm}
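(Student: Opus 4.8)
The plan is to obtain the statement as a direct, \emph{sharp} instantiation of Theorem \ref{thm:scenario-approach-discarded-constraints} applied to the one-dimensional scenario program \eqref{eq:scenario-program-order}. First I would recall, exactly as in Section \ref{sec:scen_conf_van}, that under Assumption \ref{ass:order} the program \eqref{eq:scenario-program-order} is fully-supported in the sense of Definition \ref{def:fully-supported-scenario-program}, with $d=1$ since $\overline{R}$ is its only decision variable, and that the discarding rule of removing the largest remaining score repeatedly coincides with the cascade procedure in Theorem \ref{thm:scenario-approach-discarded-constraints}: removing the support sample at each stage amounts to removing the current maximum. Since the scores are distinct and sorted, each interim optimal value is strictly exceeded by every score discarded at that stage or later, so the non-degeneracy condition under which Theorem \ref{thm:scenario-approach-discarded-constraints} (equivalently, Theorem 5 of \cite{Romao2023}) holds with equality is met.

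Next I would apply Theorem \ref{thm:scenario-approach-discarded-constraints} with $d=1$, so that the tail sum runs up to $r+d-1 = r$, and invoke the tightness just discussed to get
\begin{align}
\mathbb{P}^m \big \{ S \in \Omega^m: V(S) \leq \epsilon \big \} &= 1 - \sum_{i=0}^{r} \binom{m}{i} \epsilon^i (1-\epsilon)^{m-i} \nonumber \\
&= 1 - \delta,
\end{align}
where the second equality is the definition of $\delta$ in the statement. Substituting the explicit violation probability $V(S) = \mathbb{P}\{R > R_p\}$ from \eqref{eq:prob_viol_1d} yields precisely \eqref{eq:bound}. Finally I would verify the claimed value of $p$: by construction $p = m - r$, and plugging in $r = m - \nceil{(1-\epsilon)(m+1)}$ gives $p = \nceil{(1-\epsilon)(m+1)}$, so that $R_p$ is the $(1-\epsilon)$-quantile $\mathrm{Quantile}_{1-\epsilon}(R_1,\ldots,R_m,\infty)$ of the calibration scores, matching the interpretation in Theorem \ref{thm:scen_conf_van}.

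I expect the only delicate point to be justifying the equality rather than the inequality in the displayed identity, i.e.\ checking that the specific discarding rule for \eqref{eq:scenario-program-order} genuinely falls within the sharp regime of Theorem \ref{thm:scenario-approach-discarded-constraints}; this reduces, as above, to observing that every discarded score strictly dominates the optimum of the residual program, which is immediate from the strict ordering in Assumption \ref{ass:order}. One should also note in passing that the induced $r = m - \nceil{(1-\epsilon)(m+1)}$ lies in the admissible range $\{0,\ldots,m-1\}$ for the relevant values of $\epsilon$, so that the construction of Section \ref{sec:scen_conf_van} applies verbatim. Everything else is bookkeeping with the binomial tail and the definitions of $r$, $p$, and $\delta$.
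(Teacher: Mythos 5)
Your proposal is correct and follows essentially the same route as the paper: instantiate Theorem \ref{thm:scenario-approach-discarded-constraints} with $d=1$ for the program \eqref{eq:scenario-program-order}, observe that the max-discarding rule is exactly the cascade procedure and that the strictly ordered scores make every discarded sample violate all interim solutions (so the bound holds with equality), then substitute $V(S)=\mathbb{P}\{R>R_p\}$ and check $p=m-r=\nceil{(1-\epsilon)(m+1)}$. The only difference is presentational; the paper folds this reasoning into the surrounding text rather than a displayed proof.
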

Notice that the upper limit in the summation in the right-hand side of \eqref{eq:bound} is $r$ as opposed to $r+d-1$ in Theorem \ref{thm:scenario-approach-discarded-constraints}, as $d=1$ for the problem in \eqref{eq:scenario-program-order}. Moreover, the result is tight, as argued prior to the theorem statement.

Theorem \ref{thm:scen_conf_cond} shows that $R_p = C(R_1,\ldots,R_m,r)$ can be interpreted as the $1-\epsilon$ quantile $\mathrm{Quantile}_{1-\epsilon}(R_1,\ldots,R_m,\infty)$. However, this time we have a two-layer of probability assessment. The inner probability is a random variable depending on the so called calibration samples $R_1,\ldots,R_m$, and hence the term calibration conditional conformal prediction. 

\begin{remark}\label{rem:sample2}
In general, fixing any $\epsilon \in (0,1)$ and an integer $r$ for the samples to be discarded, we have that for $d=1$,
        \begin{align}
                \mathbb{P}^m \big \{ S \in \Omega^m&: \mathbb{P} \big \{ R> R_p \big \} \leq \epsilon \big \} \nonumber \\&= 1-\sum_{i = 0}^{r} \binom{m}{i} \epsilon^i (1-\epsilon)^{m-i}.
        \end{align}
        Notice that this results holds with equality due to the particular structure of the scenario program and the discarding mechanism. As such, this directly implies that $\mathbb{P} \big \{ R> R_p \big \}$ is distributed according to the tail of 
        binomial distribution, which in turn coincides with 
        \begin{align}
        \mathbb{P} \big \{ R> R_p \big \} \sim \mathrm{Beta}(r+1,m-r)
        \end{align}
        where $\mathrm{Beta}$ denotes a beta distribution with parameters $r+1$ and $m-r$. This is in agreement with \cite{Vovk1999,Angelopoulos2023,Lindemann2024}.

        Alternatively, given $\epsilon, \delta \in (0,1)$, and $r$, one can determine the number of samples required so that \eqref{eq:bound} is satisfied. A sufficient but explicit condition for this to happen is if
        \begin{align}
        m \geq \frac{2}{\epsilon} \Bigg ( r+ \ln \frac{1}{\delta} \Bigg ).
        \end{align}
        This follows directly from \cite[Section 3.2.1]{Campi2018} (due to the particular scenario program under consideration that allows for the bound in Theorem \ref{thm:scenario-approach-discarded-constraints}). In the expression in \cite[Section 3.2.1]{Campi2018} we set $r+d$ in place of $d$ as we now have discarded samples, and substitute for $d=1$. 
        \end{remark}
        
Calibration conditional conformal prediction bounds were revisited in \cite[Section 2.2]{Lindemann2024}. Adapting that result to our notation, this would require fixing $\epsilon, \delta \in (0,1)$, and the total number of samples $m$, and choosing 
 \begin{align}
 r = m - \Big \lceil \Big (1-\epsilon + \sqrt{\frac{\ln (1/\delta)}{2m}} \Big )(m+1) \Big \rceil.
 \end{align}
Under this choice, and since $p = m-r$, the constructed quantile is the $\Big (1-\epsilon + \sqrt{\frac{\ln (1/\delta)}{2m}} \Big )$ one, as opposed to the $1-\epsilon$ quantile of Theorem \ref{thm:scen_conf_cond}.

It is worth mentioning that the approach in \cite{Lin2024} attempted to connect the calibration conditional conformal prediction and the scenario approach with sample discarding. The analysis therein also employed a scenario approach with sample discarding bound which, requires (as with our work) the number of sampled to 
 to be discarded to be \emph{a priori} fixed. However, 
in \cite{Lin2024} this number was determined once all samples are observed and is hence an \emph{a posteriori} quantity, thus hampering the ability to employ such bound. Here we use discarding via a completely different analysis technique, thus not being affected by the aforementioned issue.

\section{Concluding remarks and future work} \label{sec:conc}

This paper investigates fundamental connections between conformal prediction and the scenario approach. We have elicited specific choices of score functions and set predictor for which we can study properties of the optimal solution of scenario programs using conformal prediction results. We have also drew connections on the opposite direction, namely, using scenario approach results to study guarantees on the validity of the predictor set.

Current research concentrates towards extending Proposition \ref{prop:CP-SP-connection} to scenario programs that are not necessarily fully-supported, using a regularization procedure as in \cite{Romao2023}. Moreover, we aim at employing adaptive conformal prediction concepts in scenario optimization, and using recent \emph{a posteriori} results on the scenario approach can be used to guide new conformal prediction methods. 

\addtolength{\textheight}{-12cm}   






\bibliographystyle{IEEEtran}
\bibliography{cdc-bib-files.bib, cdc-bib-files-2.bib}

\end{document}